\algnewcommand{\IIF}[1]{\State\algorithmicif\ #1\ \algorithmicthen}
\algnewcommand{\ENDIIF}{\unskip\ \algorithmicend\ \algorithmicif}
\Crefname{ALC@unique}{Line}{Lines}
\newcommand{\IPFP}{\texttt{IPFP}\xspace}
\newcommand{\BRANCHUNIFORM}{\texttt{BRANCH-UNIFORM}\xspace}
\newcommand{\aids}{\textsc{aids}\xspace}
\newcommand{\acyclic}{\textsc{acycl}\xspace}
\newcommand{\mao}{\textsc{mao}\xspace}
\newcommand{\muta}{\textsc{muta}\xspace}
\newcommand{\pah}{\textsc{pah}\xspace}
\newcommand{\stocks}{\textsc{stocks}\xspace}
\newcommand{\stocksint}{\textsc{stocks-i}\xspace}
\newcommand{\stocksfloat}{\textsc{stocks-f}\xspace}
\newcommand{\stocksunl}{\textsc{stocks-n}\xspace}
\pgfplotsset{compat=1.15, cycle list/Set1-8}
\definecolor{DarkGray}{RGB}{10,60,50}
\definecolor{C1}{RGB}{27,158,119}
\definecolor{C2}{RGB}{217,95,2}
\definecolor{C3}{RGB}{117,112,179}
\definecolor{C4}{RGB}{231,41,138}
\definecolor{C5}{RGB}{102,166,30}
\definecolor{C6}{RGB}{230,171,2}
\definecolor{C7}{RGB}{166,118,29}
\definecolor{C8}{RGB}{102,102,102}
\tikzstyle{ABCTARBZ}=[mark=triangle, mark size=2.5pt, semithick, color=C1]
\tikzstyle{ABCONLY}=[mark=x, mark size=2.5pt, semithick, color=C2]
\tikzstyle{TARBZ}=[mark=diamond, mark size=2.5pt, semithick, color=C3]
\tikzstyle{ABCNOREF}=[mark=star, mark size=2.5pt, semithick, color=C6]
\tikzstyle{ABCREF}=[mark=asterisk, mark size=2.5pt, semithick, color=C4]
\pgfplotsset{every tick label/.append style={font=\scriptsize}}
\newcommand{\ratioplots}[2]{%
\nextgroupplot[
title={#1},
title style={yshift=.05cm},
ylabel={compression ratio},
xlabel style={align=center,font=\scriptsize,yshift=.1cm},
ylabel style={font=\scriptsize,yshift=-.1cm},
xlabel={$k/|X|$ in \%},
ymin={0},
ymax={0.032},
legend style={align=left, draw=none, column sep=1em,font=\small,legend columns=3, legend cell align=left},
legend to name={ratiolegend},
xtick={20,40,60,80,100}
]
\addlegendimage{ABCTARBZ};
\addlegendentry{\ABC w/ tar.bz};
\addlegendimage{ABCONLY};
\addlegendentry{\ABC w/o tar.bz};
\addlegendimage{TARBZ};
\addlegendentry{tar.bz};
\addplot[ABCTARBZ] plot [error bars/.cd, y dir = both, y explicit] table [x=percent_graphs, y=mean_file_size_ratio_<abc_refined_plus_tar>, y error=std_file_size_ratio_<abc_refined_plus_tar>, col sep=comma] {data/#2};
\addplot[ABCONLY] table [x=percent_graphs, y=ABC_only_compression_rate, col sep=comma] {data/#2};
\addplot[TARBZ] table [x=percent_graphs, y=compression_ratio_<tar.bz>, col sep=comma] {data/#2};
}
\newcommand{\ctimeplots}[2]{%
\nextgroupplot[
title={#1},
title style={yshift=-.1cm},
ylabel={runtime},
xlabel style={align=center,font=\scriptsize,yshift=.1cm},
ylabel style={font=\scriptsize,yshift=-.1cm},
xlabel={$k/|X|$ in \%},
legend style={align=left, draw=none, column sep=1em,font=\small,legend columns=2, legend cell align=left},
legend to name={ctimelegend},
xtick={20,40,60,80,100}
]
\addlegendimage{ABCNOREF};
\addlegendentry{\ABC{} -- arborescence};
\addlegendimage{ABCREF};
\addlegendentry{\ABC{} -- refinement};
\addlegendimage{ABCONLY};
\addlegendentry{\ABC{} -- total};
\addlegendimage{TARBZ};
\addlegendentry{tar.bz};
\addplot[ABCNOREF] plot [error bars/.cd, y dir = both, y explicit] table [x=percent_graphs, y=mean_gedlib_runtime, y error=std_gedlib_runtime, col sep=comma] {data/#2};
\addplot[ABCREF] plot [error bars/.cd, y dir = both, y explicit] table [x=percent_graphs, y=mean_gedlib_runtime_refinement, y error=std_gedlib_runtime_refinement, col sep=comma] {data/#2};
\addplot[ABCONLY] plot [error bars/.cd, y dir = both, y explicit] table [x=percent_graphs, y=mean_compression_time_<abc_refined_plus_tar>, y error=std_compression_time_<abc_refined_plus_tar>, col sep=comma] {data/#2};
\addplot[TARBZ] table [x=percent_graphs, y=compression_time_<tar.bz>, col sep=comma] {data/#2};
}
\DeclareMathOperator*{\argmin}{arg\,min}
\newcommand{\bigA}{\ensuremath{\mathcal{A}}\xspace}
\newcommand{\bigS}{\ensuremath{\mathcal{S}}\xspace}
\newcommand{\NP}{\ensuremath{\mathcal{NP}}\xspace}
\newcommand{\spaceA}{\ensuremath{\mathbb{A}}\xspace}
\newcommand{\spaceG}{\ensuremath{\mathbb{G}}\xspace}
\newcommand{\abclong}{\textsc{Arborescence-Based Compression Problem}\xspace}
\newcommand{\mealong}{\textsc{Minimum Edit Arborescence Problem}\xspace}
\newcommand{\msalong}{\textsc{Minimum Spanning Arborescence Problem}\xspace}
\newcommand{\mgealong}{\textsc{Minimum Graph Edit Arborescence Problem}\xspace}
\newcommand{\ABC}{\ensuremath{\mathit{ABC}}\xspace}
\newcommand{\MEA}{\ensuremath{\mathit{MEA}}\xspace}
\newcommand{\MSA}{\ensuremath{\mathit{MSA}}\xspace}
\newcommand{\MGEA}{\ensuremath{\mathit{MGEA}}\xspace}
\newcommand{\ie}{i.\,e.\@\xspace}
\newcommand{\eg}{e.\,g.\@\xspace}
\newcommand{\wrt}{w.\,r.\,t.\@\xspace}
\newcommand{\mywlog}{w.\,l.\,o.\,g.\@\xspace}
\newcommand{\GED}{\ifmmode\mathrm{GED}\else{GED}\fi\xspace}
\newcommand{\ALGONE}{\ensuremath{\operatorname{\mathtt{ALG-1}}}\xspace}
\newcommand{\ALGDIST}{\ensuremath{\operatorname{\mathtt{ALG-DIST}}}\xspace}
\newcommand{\ALGTWO}{\ensuremath{\operatorname{\mathtt{ALG-2}}}\xspace}
\newcommand{\enc}{\textnormal{C}}
\begin{document}

\title{The Minimum Edit Arborescence Problem and Its Use in Compressing Graph Collections [Extended Version]\thanks{Supported  by  Agence  Nationale  de  la  Recherche  (ANR),  projects STAP ANR-17-CE23-0021 and DELCO ANR-19-CE23-0016. FY acknowledges  the  support  of  the  ANR  as  part  of  the \enquote{Investissements  d’avenir} program, reference ANR-19-P3IA-0001 (PRAIRIE 3IA Institute).}}
\titlerunning{The Minimum Edit Arborescence Problem}

\author{%
Lucas Gnecco\inst{1} \orcidID{0000-0002-1561-2080} \and 
Nicolas Boria\inst{1} \orcidID{0000-0002-0548-4257} \and 
Sébastien Bougleux\inst{2} \orcidID{0000-0002-4581-7570} \and 
Florian Yger\inst{1} \orcidID{0000-0002-7182-8062} \and 
David B. Blumenthal\inst{3} \orcidID{0000-0001-8651-750X}}

\authorrunning{L. Gnecco et al.}

\institute{%
PSL Université Paris-Dauphine, LAMSADE, Paris, France \\ \email{\{lucas.gnecco,nicolas.boria,florian.yger\}@dauphine.fr} \and
UNICAEN, ENSICAEN, CNRS, GREYC, Caen, France \\ \email{sebastien.bougleux@unicaen.fr} \and
Department Artificial Intelligence in Biomedical Engineering (AIBE), Friedrich-Alexander University Erlangen-Nürnberg (FAU), Erlangen, Germany \\ \email{david.b.blumenthal@fau.de}}

\maketitle

\begin{abstract} The inference of minimum spanning arborescences within a set of objects is a general problem which translates into numerous application-specific unsupervised learning tasks. We introduce a unified and generic structure called \emph{edit arborescence} that relies on edit paths between data in a collection, as well as the \mealong, which asks for an edit arborescence that minimizes the sum of costs of its inner edit paths. Through the use of suitable cost functions, this generic framework allows to model a variety of problems. In particular, we show that by introducing \emph{encoding size preserving edit costs}, it can be used as an efficient method for compressing collections of labeled graphs. Experiments on various graph datasets, with comparisons to standard compression tools, show the potential of our method.
\keywords{Edit arborescence  \and Edit distance \and Lossless compression.}
\end{abstract}

\section{Introduction}\label{sec:intro}

The discovery of some underlying structure within a collection of data is the main goal of unsupervised learning. 
Among the different kinds of graph structures available for structure inference, arborescences play an essential role, because they contain the minimal number of edges required to connect all the entries of the collection and induce a meaningful hierarchy within the data. For these reasons, arborescences are widely used in structure inference, in numerous fields ranging from bioinformatics \cite{CORNWELL2017R333} to computational linguistics \cite{moschitti-etal-2006-semantic}.
For constructing arborescences, distances have to be computed for data objects within the collection. While the computation of distances is trivial for many kinds of simple data (\eg, vectors in Euclidean space), it is often challenging for more complex kinds of data such as strings, trees, or graphs. For such data, \emph{edit distances}\,---\,measuring the distance between two objects $o_1$ and $o_2$ as the cost of modifications needed to transform $o_1$ into $o_2$\,---\,provide meaningful measures.

In this work, we propose a unified and generic framework for minimum arborescence computation on collections of structured data for which an edit distance is available. We introduce the concept of \emph{edit arborescence} which generalizes the concept of edit path (a sequence of edit operations, or modifications), and we formalize the $\mealong$ (\MEA).
By using appropriate edit cost functions over the edit operations, as well as different sets of allowed edit operations, this generic framework allows to tackle a variety of specific problems, such as event detection in time series \cite{guralnik:1999wp}, morphological forests inference over a language vocabulary \cite{luo:2017uk}, or structured data compression \cite{chwatala09}.

As a proof of concept, we focus on the latter application, and address the problem of compressing a collection of labeled graphs. 
To the best of our knowledge, this problem has not been addressed in the literature. In graph stores, each graph is encoded individually using space-efficient representations based on different, mainly lossless compression schemes \cite{besta18,besta19b}, 
but without taking into account the other graphs in the store. This is also the case for lossy graph compression schemes 
\cite{sourek2021lossless}. All of these compression schemes are beyond the main focus of this paper, and we refer the reader to the above references.

Contrary to these schemes, our compression method relies heavily on reference-based compression underpinned by an arborescence connecting the graphs of the collection. Intuitively, each graph is represented by an edit path between its parent graph and itself. Each graph can thus be reconstructed recursively up to the root element of the arborescence, which we define as the empty graph. Similar ideas have been proposed for compressing web graphs seen as a temporal graphs with edge insertions and deletions \cite{adler01}, or collections of bitvectors using the Hamming distance \cite{BOOKSTEIN91}, recently applied to graph annotations (colors) \cite{Almodaresi20}. While these approaches can be considered as early examples of using \MEA in compression, our formulation is more general.

We first formalize the concepts of edit distances and arborescences in \Cref{sec:prelim}. In \Cref{sec:mea}, we introduce edit arborescences and define \MEA. \Cref{sec:mgea} deals specifically with graph data and the graph edit distance, and formalizes the \mgealong (\MGEA). \Cref{sec:abc} provides detailed explanations on how to make use of the \MGEA to address the compression of a set of labeled graphs. In \Cref{sec:xp}, we report the results of the experimental evaluation. Finally, \Cref{sec:conc} concludes the paper and points out to future work. 

 

\section{Preliminaries}\label{sec:prelim}


We consider data (sequence, tree, graph) defined by a combinatorial structure and labels attached to the elements of this structure. Labels may be of any type. Unlabeled and unstructured data are special cases. 


\paragraph{\textbf{Edit Distance.}}
Given a space $\Omega$ of all data of a fixed type, an \emph{edit path} is a sequence of elementary modifications (or \emph{edit operations}) transforming an object of $\Omega$ into another one. Typical edit operations are the deletion and the insertion of an element of the structure, and the substitution of an attribute attached to an element. Given a cost function $c\geq 0$ defined on edit operations, the \emph{edit distance} $d_c:\Omega\times\Omega\rightarrow\mathbb{R}_{\geq 0}$ measures the minimal total cost required to transform $x\in\Omega$ into $y\in\Omega$, up to an equivalence relation:
    $d_c(x,y)\coloneqq {\min}_{P\in\mathcal{P}(x,y)}\,c(P)$, 
with $c(P)\coloneqq\sum_{o\in P}c(o)$ the cost of an edit path $P$, and $\mathcal{P}(x,y)$ the set of all edit paths transforming $x$ into an element of $[y]\coloneqq\{z\in\Omega\,|\,y\sim z\}$, the equivalence class of $y$ for an equivalence relation $\sim$ on $\Omega$.
Equality is the equivalence relation usually considered for strings or sequences (Hamming, Levenshtein or 
discrete time warping distances). 
Isomorphism is used for trees and graphs.

The set $\Omega$ equipped with an edit distance $d_c$ defines an \emph{edit space} $(\Omega,d_c)$. 
We assume that $d_c$ is metric or pseudometric (
if $\sim$ is not equality). 
The set $\Omega$ contains a null (or empty) element denoted by $0_\Omega\in\Omega$. Any other element of $(\Omega,d_c)$ can be constructed by insertion operations only from $0_\Omega$.

\paragraph{\textbf{Arborescences.}}
A directed graph (digraph) is a pair $G\coloneqq(V,E)$, where $V\coloneqq\{v_0,..,v_n\}$ is a set of nodes and $E\subseteq V\times V$ is a set of directed edges. Within such a graph, a \emph{spanning arborescence} is a rooted, oriented, spanning tree, \ie, a set of edges that induces exactly one directed path from a root node $r\in V$ to each other node in $V\setminus\{r\}$. By assuming w.\,l.\,o.\,g.\ that the root element is $v_0$ and reminding that all other nodes in an arborescence have a unique parent node, an arborescence can 
be represented by a sequence of node indices $\bigA$ such that, for all $i \in [1,n]$, $\bigA[i]$ denotes the index of the unique parent node of node $v_i$. The set of edges of $\bigA$ is denoted by $E^\bigA$.

\section{The \mealong}\label{sec:mea}
In this section, we introduce and describe the generic \mealong ~ (\MEA), a versatile problem. An instance of \MEA is a finite dataset $X$ living in an edit space $(\Omega,d_c)$. \MEA asks for a minimum-cost edit arborescence rooted at the null element.

Given a set $X\coloneqq\{x_0,x_1,...,x_n\}\subset\Omega$ such that $x_0\coloneqq 0_\Omega$, we define an \emph{edit arborescence} as a pair $(\bigA,\Psi)$, where 
    $\bigA$ is a sequence of $n$ indices that defines an arborescence rooted at the index $0$, such that for all $i\in[1,n]$, $\bigA[i]$ is the parent-index of $i$. 
    $\Psi\coloneqq(P_1,...,P_n)$ is a sequence of edit paths, such that $P_i\in \mathcal{P}(x_{\bigA[i]},x_i)$ holds for all $\in[1,n]$, 
    \ie, $P_i$ is an edit path between $x_i$ and its parent in $\bigA$.
$\spaceA(X)$ is the set of all edit arborescences on $X$.



\begin{definition}[\MEA]\label{def:mea}
Given a finite set $X\subset\Omega$ and an edit cost function $c$, the \mealong (\MEA) asks for an edit arborescence $(A^\star,\Psi^*)$ on $X\cup\{0_\Omega\}$, which is rooted at the null element $0_\Omega\in\Omega$ and has a minimum cost $c(\Psi^\star)$ among all $(\bigA,\Psi)\in\spaceA(X)$, with $c(\Psi)\coloneqq\sum_{P\in\Psi}c(P)$.
\end{definition}
By definition, it holds that $c(\Psi^\star)=\min_{(\bigA,\Psi)\in\spaceA_c(X)}\sum_{P_i\in\Psi}d_c(x_{\bigA[i]},x_i)$, where $\spaceA_c(X)$ is the set of edit arborescences in $(\Omega,d_c)$, \ie, edit arborescences with edit paths restricted to minimal-cost edit paths \wrt $c$. This generic definition can translate into various optimization problems, with different characteristics in terms of complexity and/or approximability, depending on the edit space. 

\paragraph{\textbf{Exact Solver.}}
Whenever exact edit distances and corresponding edit paths can be computed, the following procedure produces an optimal solution for $\MEA$:
\begin{enumerate}\itemsep0em
    \item Construct the complete directed weighted graph on the set $X\cup\{0_\Omega\}$, denoted by $\mathcal{K}(X,d_c)\coloneqq(V^\mathcal{K},E^\mathcal{K},w)$, with node set $V^\mathcal{K}\coloneqq X\cup\{0_\Omega\}$ and edge weights $w(u,v)\coloneqq d_c(u,v)$ for all $(u,v)\in E^\mathcal{K}$. Note that any edge entering the root can be removed.
    \item Solve the \textsc{Minimum Spanning Arborescence Problem} (\MSA) on $\mathcal{K}(X,d_c)$  with $0_\Omega$ as root node.
\end{enumerate}
For a connected weighted directed graph $G$ and a root node $r$ in $G$, the \msalong (\MSA) asks for a spanning arborescence $A^\star$ on $G$, which is rooted in $r$ and has minimum weight $w(A^\star)$, where $w(A)\coloneqq\sum_{(u,v)\in A}w(u,v)$ \cite{edmonds:1967aa}. 
\MSA can be solved in polynomial time, \eg, in $O(|V^G|^2)$ time with Tarjan's implementation \cite{tarjan:1977aa} of Edmonds' algorithm \cite{edmonds:1967aa}. Hence, the main difficulty of the problem consists in computing the edge weights in $\mathcal{K}$, \ie, the edit distances between elements of $X$.

\begin{lemma}\label{lem:inP}
As long as the edit space $(\Omega,d_c)$ allows for a polynomial time computation of minimum-cost edit paths, the corresponding version of \MEA belongs to the complexity class $\mathcal{P}$.
\end{lemma}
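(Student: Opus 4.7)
The plan is to formalize the exact solver sketched right before the lemma and verify both its correctness and its polynomial runtime. First, I would observe the decomposition
$$c(\Psi) \;=\; \sum_{i=1}^{n} c(P_i) \;\geq\; \sum_{i=1}^{n} d_c(x_{\bigA[i]}, x_i),$$
which holds for every edit arborescence $(\bigA,\Psi)\in\spaceA(X\cup\{0_\Omega\})$ by the definition of $d_c$, with equality whenever each $P_i$ is chosen to be a minimum-cost edit path from $x_{\bigA[i]}$ to $x_i$. Consequently, minimizing $c(\Psi)$ over all edit arborescences reduces to two independent subproblems: (i) selecting the underlying arborescence $\bigA$ so as to minimize $\sum_i d_c(x_{\bigA[i]},x_i)$, and (ii) attaching a minimum-cost edit path on every edge of $\bigA$.

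Next, I would argue that step (i) is exactly an \MSA instance on $\mathcal{K}(X,d_c)$ rooted at $0_\Omega$, which by Tarjan's implementation of Edmonds' algorithm can be solved in $O(|X|^2)$ time, and that step (ii) amounts to exactly $n$ calls to the assumed polynomial-time routine that computes a minimum-cost edit path between two elements of $\Omega$. Thus, once all edge weights of $\mathcal{K}(X,d_c)$ are known, the remainder of the algorithm is polynomial in $|X|$ and in the sizes of the representations of the elements of $X$.

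The one remaining piece is the construction of $\mathcal{K}(X,d_c)$ itself. Building this graph requires computing $|X|\cdot(|X|+1)$ pairwise edit distances $d_c(x_{i},x_{j})$ (including distances from the null element $0_\Omega$, each of which simply corresponds to inserting $x_j$ from scratch). Under the hypothesis of the lemma, each such distance can be extracted as the cost of a minimum-cost edit path in polynomial time, so the total construction cost is polynomial. Combining the three bounds yields an overall polynomial-time procedure, placing the corresponding version of \MEA in $\P$.

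The proof contains no real obstacle: the only point worth stating carefully is the decomposition argument that justifies reducing \MEA to \MSA on $\mathcal{K}(X,d_c)$, which is where the assumption that $d_c$ is computed via actual minimum-cost edit paths (and not merely as an abstract distance) is used to guarantee that a corresponding edit arborescence $\Psi^\star$ can be reconstructed from the optimal $\bigA^\star$.
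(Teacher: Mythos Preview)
Your proposal is correct and follows essentially the same approach as the paper: build the complete weighted digraph $\mathcal{K}(X,d_c)$ via $O(n^2)$ polynomial-time edit-distance computations and then run Tarjan's $O(n^2)$ implementation of Edmonds' algorithm. The paper's proof is terser and omits the explicit decomposition argument you spell out (relying instead on the observation stated just before the lemma), but the substance is identical.
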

\begin{proof}
    By assumption, 
    $d_c$ is computed in polynomial time by some algorithm \ALGDIST that is called $O(n^2)$ times with complexity $O_{\ALGDIST}$ in order to generate the complete graph 
    $\mathcal{K}(X,d_c)$. So, \MEA can be solved in 
    $O(n^2O_{\ALGDIST}+(n+1)^2)$ time complexity by using Tarjan's implementation of Edmond's algorithm.\qed
\end{proof}

\paragraph{\textbf{A Heuristic for Non-Polynomial Cases.}}
\begin{algorithm}[!t]
\caption{A generic heuristic for $\MEA$.}\label{alg:generic_heuristic}
\begin{algorithmic}[1]
\REQUIRE A finite set $X$ of elements from an edit space $(\Omega,d)$ with origin $0_\Omega$, a parameter $k\in[0,|X-1|]$, two edit distance heuristics \ALGONE and \ALGTWO.
\ENSURE A low-cost edit arborescence $(\mathcal{A},\Psi)$ for the $\MEA$ problem.
\STATE Set $x_0\coloneqq 0_\Omega$ and initialize auxiliary graph $\mathcal{K}(X\cup x_0,E^{\mathcal{K}},w)$ with $E^{\mathcal{K}}\coloneqq\{x_0\}\times X$.\label{algo:compression:edges-start}
\STATE\textbf{for} $x\in X$ \textbf{do} Sample $k$ children $\Tilde{X}\in\binom{X\setminus\{x\}}{k}$ and set $E^\mathcal{K}\coloneqq E^\mathcal{K}\cup(\{x\}\times \Tilde{X})$.
\IF{prior information available}
Add promising edges to $E^{\mathcal{K}}$.\label{algo:compression:edges-end}
\ENDIF
\FOR{$ (x_i,x_j) \in E^\mathcal{K}$}\label{algo:compression:weights-start}
 \IF{$i=0$}
    Analytically compute the edit path $P_{ij}$.
 \ELSIF{identifiers available}
 Compute edit path $P_{ij}$ induced by identifiers.\label{algo:compression:identifiers}
 \ELSE{} Call \ALGONE to compute low-cost edit path $P_{ij}$.
 \ENDIF
\STATE Set $w(x_i,x_j)\coloneqq c(P_{ij})$.\label{algo:compression:weights-end}
\ENDFOR
\STATE Run Edmonds' algorithm on $\mathcal{K}$ to obtain $\bigA$\label{algo:compression:edmonds}.
\STATE\textbf{if} tightening \textbf{then for} $i\in [1,n]$ \textbf{do} Call \ALGTWO to compute tighter edit path $P_{\bigA[i]i}$.\label{algo:compression:tightening}
 \FOR{$ i \in [1,n]$}\label{algo:compression:return-start}
    Set $\Psi[i]\coloneqq P_{\bigA[i]i}$.
\ENDFOR
\STATE \textbf{return} $(\mathcal{A},\Psi)$\label{algo:compression:return-end}
\end{algorithmic}
\end{algorithm}
We adapt the algorithm described above to cases where the edit distance is not solvable in polynomial time. Given a set $X\subset\Omega$, \Cref{alg:generic_heuristic} computes a \emph{low-cost} edit arborescence $(\bigA,\Psi)\in\spaceA(X)$ based on approximations or heuristics to estimate the edit distance. It starts by constructing a size-reduced auxiliary digraph $\mathcal{K}$ (lines \ref{algo:compression:edges-start} to \ref{algo:compression:edges-end}) that connects $0_\Omega$ to each element $x_i\in X$, and each $x_i$ to $k\leq|X|-1$ randomly selected elements of $X\,{\setminus}\,\{x_i\}$. If some promising edges are known \emph{a priori} (\eg, if $X$ has an implicit internal structure), they are added to $\mathcal{K}$.
Then, \Cref{alg:generic_heuristic} computes optimal or low-cost edit paths whose costs provide weights for the edges of $\mathcal{K}$ (lines \ref{algo:compression:weights-start} to \ref{algo:compression:weights-end}). For $0_\Omega$' out-edges, optimal edit paths can be computed analytically (insertions only). If identifying attributes are available for all elements of $X$ (\eg, unique node labels if $X$ is a set of graphs), it is sometimes possible to compute optimal edit paths from these identifiers. Otherwise, low-cost edit paths are computed by calling a polynomial edit distance heuristic \ALGONE. Once all edge weights for $\mathcal{K}$ have been computed, an optimal arborescence $\bigA$ on $\mathcal{K}$ is constructed by Edmonds' algorithm (line \ref{algo:compression:edmonds}). Optionally, a tighter edit distance heuristic \ALGTWO can be called to shorten the paths in $\bigA$ before returning the edit arborescence (line \ref{algo:compression:tightening}).

\section{\mgealong}\label{sec:mgea}

In the remainder of the paper, we will focus on the specific case of \MEA where the space $\Omega$ is a space of labeled graphs. 

\paragraph{\textbf{Graphs.}}
We assume 
that graphs are finite, simple, undirected, and labeled. However, all presented techniques can be straightforwardly adapted to directed or unlabeled graphs.
A labeled graph $G$ is a four-tuple $G\coloneqq(V^G,E^G,\ell^G_V,\ell^G_E)$, where $V^G$ and $E^G$ are sets of nodes and edges, while  $\ell^G_V:V^G\to\Sigma_V$ and $\ell^G_E:E^G\to\Sigma_E$ are labeling functions that annotate nodes and edges with labels from alphabets $\Sigma_V$ and $\Sigma_E$, respectively. $\spaceG(\Sigma_V,\Sigma_E)$, or $\spaceG$ for short, denotes the set of all graphs for fixed alphabets $\Sigma_V$ and $\Sigma_E$. $0_\spaceG$ denotes the empty graph (the null element of $\spaceG$). Two graphs $G,H\in\spaceG$ are \emph{isomorphic}, denoted by $G\simeq H$, if and only if there is a bijection between 
$V^G$ and $V^H$ that preserves both edges and labels.

\paragraph{\textbf{Edit Operations and Edit Paths.}}\label{sec:operations}
  We consider the following elementary edit operations, where $\epsilon$ is a dummy node and $\epsilon_\ell$ is a dummy label:
\begin{itemize}
 \item Node deletion (nd): $(v,\epsilon_\ell)$, with $v\in V^G$ isolated.
 \item Edge deletion (ed): $(e,\epsilon_\ell)$, with $e\in E^G$.
 \item Node relabeling (nr): $(v,\ell)\in V^G\times(\Sigma_V\setminus\{\ell^G_V(v)\})$.
 \item Edge relabeling (er): $(e,\ell)\in E^G\times(\Sigma_E\setminus\{\ell^G_E(e)\})$.
 \item Node insertion (ni): $(\epsilon,\ell)$, with $\ell\in\Sigma_V$.
 \item Edge insertion (ei): $(e,\ell)\in(\binom{V^G}{2}\setminus E^G)\times\Sigma_E$.
\end{itemize}
For each edit path $P$ composed of such 
operations, there are many equivalent edit paths with a same edit cost, obtained just by reordering the operations in $P$. In particular, as the deletion of a node assumes that its incident edges have been previously deleted, these operations can be replaced by node-edge deletions (ned): delete all the edges incident to a node and then delete this node. So we can distinguish two different types of edge deletions:
implied edge deletion (i-ed), \ie, an edge deletion in a node-edge deletion, and non-implied edge deletion (ni-ed), \ie, an edge deletion between two nodes that are not deleted by $P$.
The cost of an edit path $P$ can thus be rewritten as
$
    c(P)=\sum_{t\in T}\sum_{o\in P^t}c^t(o)\text{,}
$
where $P^t$ is the (possibly empty) set of all edit operations of type $t\in T$, with $T\coloneqq\{\text{ni-ed},\text{i-ed},\text{nd},\text{nr},\text{er},\text{ni},\text{ei}\}$, and $c^t$ is an edit cost function for type $t$.

\begin{remark}\label{rem:sequence}
Any concatenation $\sigma_{\text{ni-ed}}(P^{\text{ni-ed}})\sqcup\sigma_{\text{i-ed}}(P^{\text{i-ed}})\sqcup\ldots\sqcup\sigma_{\text{ei}}(P^{\text{ei}})$ of edit operations, with $\sigma_t$ a permutation on $P^t$, defines an edit path equivalent to $P$.
\end{remark}
\begin{remark}\label{rem:constant_costs}
$c(P)=\sum_{t\in T}c^t|P^t|$ if $c_t$ is a constant for each type of operation $t$.
\end{remark}
\paragraph{\textbf{Node Maps and Induced Edit Paths.}}
A \emph{node map} (or \textit{error-correcting bipartite matching}) between a graph $G$ and a graph $H$ is a relation $\pi\in(V^G\cup\{\epsilon\})\times(V^H\cup\{\epsilon\})$ such that the following two conditions hold:
\begin{itemize}
\item For each node $u\in V^G$, there is exactly one node $v\in V^H\cup\{\epsilon\}$ such that $(u,v)\in\pi$. We denote this node $v$ by $\pi(u)$. \item For each node $v\in V^H$, there is exactly one node $u\in V^G\cup\{\epsilon\}$ such that $(u,v)\in\pi$. We denote this node $u$ by $\pi^{-1}(v)$.
\end{itemize}
Let $\Pi(V^G,V^H)$ be the set of all node maps. Each node map $\pi\in\Pi(G,H)$ can be transformed into an edit path, denoted by $P[\pi]$ (\textit{induced edit path}), such that, for each $(u,v)\in\pi$, there is a corresponding edit operation $(u,\ell)$: $u$ is deleted if $v=\epsilon$, it is relabeled if $(u,v)\in V^G\times V^H$ and $\ell_V^G(u)\not=\ell_V^H(v)$, or a new node is inserted if $u=\epsilon$. Operations on edges are induced by the operations on nodes, \ie, from the pairs $((u,\pi(u)),(v,\pi(v))$ with $u,v\in V^G\cup\{\epsilon\}$. For details, we refer to \cite{blumenthal:2020aa}. What is important here is that any type of edit operation is taken into account by a node map. In particular, implied and non-implied edge deletions can be distinguished with a specific cost for each type.
\paragraph{\textbf{Graph Edit Distance.}}
The cost of an optimal edit path from a graph $G$ to a graph $H^\prime\simeq H$ defines the graph edit distance (GED) from $G$ to $H$ ($d_c$ with graph isomorphism as equivalence relation):
$\GED(G,H)\coloneqq{\min}_{P \in \mathcal{P}(G,H)}\,c(P)$. 
\GED is hard to compute and approximate, even when restricting to simple special cases \cite{zeng:2009aa,blumenthal:2019ad}. However, many heuristics are able to reach tight upper and/or lower bounds. 
They are based on a reformulation of \GED as 
an \textsc{Error-Correcting Graph Matching Problem}: 
$\GED(G,H)={\min}_{\pi\in\Pi(V^G,V^H)}\,c(P[\pi])$, which is equivalent to the above definition under mild assumptions on the edit cost function $c$.
We refer to \cite{riesen:book:2016,blumenthal:2020aa} for an overview.
\paragraph{\textbf{Problem Formulation and Hardness.}}
We can now define \MGEA:
\begin{definition}[\MGEA]
The \mgealong (\MGEA) is a \MEA problem with $\Omega\,{\coloneqq}\,\spaceG$, $X\,{\coloneqq}\,\{G_1,...,G_n\}$ and $d_c\coloneqq\GED$.
\end{definition}
As the problem of computing \GED is $\NP$-hard, \Cref{lem:inP} does not apply here. 
\begin{theorem}\label{thm:hardness}
 \MGEA{} is \NP-hard.
\end{theorem}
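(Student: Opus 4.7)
The plan is to reduce from the $\GED$ computation problem, which is \NP-hard even under unit edit costs \cite{zeng:2009aa}. Given an instance $(G,H)$ of $\GED$, I would construct in polynomial time an \MGEA instance on only two graphs whose optimum value exposes $\GED(G,H)$.

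The construction fabricates $G'$ from $G$ and $H'$ from $H$ by attaching a common padding gadget: $M$ isolated nodes bearing pairwise distinct fresh labels, identical in both copies, where $M$ is chosen strictly larger than the trivial upper bound $|V^G| + |E^G| + |V^H| + |E^H|$ on $\GED(G,H)$. The freshness and pairwise distinctness of the pad labels force every optimal node map between $G'$ and $H'$ to match the pad of $G'$ to the pad of $H'$ at zero cost, so $\GED(G', H') = \GED(G,H)$. On the other hand, $\GED(0_\spaceG, G') = \GED(0_\spaceG, G) + M$ and $\GED(0_\spaceG, H') = \GED(0_\spaceG, H) + M$ are computable analytically in polynomial time, since transforming the empty graph into a target consists purely of node and edge insertions.

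Setting $X \coloneqq \{G', H'\}$, only three arborescences on $X \cup \{0_\spaceG\}$ are feasible, with respective costs $T_{\mathrm{star}} = \GED(0_\spaceG, G') + \GED(0_\spaceG, H')$, $T_1 = \GED(0_\spaceG, G') + \GED(G', H')$, and $T_2 = \GED(0_\spaceG, H') + \GED(G', H')$. The choice of $M$ guarantees $\GED(G', H') < \max(\GED(0_\spaceG, G'), \GED(0_\spaceG, H'))$, so one of the path configurations strictly beats the star, and the \MGEA optimum equals $\min(T_1, T_2) = \min(\GED(0_\spaceG, G'), \GED(0_\spaceG, H')) + \GED(G', H')$. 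Since the analytic first term is known, subtracting it from the \MGEA optimum recovers $\GED(G',H') = \GED(G,H)$, so any polynomial-time algorithm for \MGEA would solve \GED in polynomial time.

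The main obstacle is justifying the padding argument: in every optimal node map between $G'$ and $H'$ the pad must indeed be matched to itself at zero cost, without profitably interacting with the structural parts of $G$ or $H$. This is routine because the pad labels are disjoint from those used in $G$ and $H$, and each pad label appears exactly once on either side, so any deviation from the identity matching on the pad incurs a strictly positive substitution or deletion/insertion cost that cannot be recouped. With this in place, and noting that the whole construction is polynomial in the size of $(G, H)$, the \NP-hardness of \MGEA follows directly from that of $\GED$.
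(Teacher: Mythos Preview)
Your argument is correct and takes a different route from the paper. The paper reduces from \textsc{Hamiltonian Cycle}: with unit costs on unlabeled graphs it sets $X=\{G,H\}$ where $H$ is a cycle on $|V^G|$ nodes, and shows that the \MGEA optimum equals the analytically known cost $c(P[\pi_{0_\spaceG,G}])$ of constructing $G$ from the empty graph if and only if $G$ is Hamiltonian. You instead reduce from \GED itself, padding both inputs with $M$ isolated, freshly and distinctly labeled nodes so that $\GED(G',H')=\GED(G,H)$ while both root distances exceed $M$, whence the \MGEA optimum becomes a known additive constant plus $\GED(G,H)$. Your route is arguably the more direct one\,---\,\MGEA is built from \GED, so inheriting hardness from \GED via a two-graph instance is the natural move\,---\,whereas the paper's reduction is more self-contained (it appeals to a classical \NP-hard problem rather than to the hardness of \GED, which the paper only cites) and avoids the padding gadget and its exchange argument, at the price of a case analysis over the three arborescence topologies. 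One minor remark: you claim that \emph{every} optimal node map matches the pad to itself; under unit costs this is in fact true because your swap strictly decreases cost, but for the equality $\GED(G',H')=\GED(G,H)$ the weaker statement that \emph{some} optimal map does so already suffices.
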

\begin{proof}
The Hamiltonian cycle problem is an \NP-hard problem which asks to decide if an undirected, unlabeled $G$ contains a cycle passing through all of its nodes. Let $X\coloneqq\{G,H\}$ be a set of graphs with $H$ a cycle on $|V^G|$ nodes, and let $(\bigA^\star,\Psi^\star)$ be an optimal edit arborescence for X. Moreover, we assume \mywlog that $G$ contains more edges than $H$. 
There are three different edit arborescences for $X$, \ie, $\bigA(X)=\{(\bigA_1,\Psi_1)(\bigA_2,\Psi_2),(\bigA_3,\Psi_3)\}$ with edges $E^{\bigA_1}=\{(0_\spaceG,H),(H,G)\}$, $E^{\bigA_2}=\{(0_\spaceG,H),(0_\spaceG,G)\}$, and $E^{\bigA_3}=\{(0_\spaceG,G),(G,H)\}$, respectively. 

Assume that $G$ is Hamiltonian. Then, by construction of $H$, an optimal node map $\pi^\star_{H,G}\in\argmin_{\pi\in\Pi(G,H)}c(P[\pi])$ induces an edit path that contains only edge insertions, the edges of $G$ not mapped to an edge of $H$. So $c(\Psi_1)=c(P[\pi_{0_\spaceG,G}])$. As $H$ is non-empty and $G$ has more edges than $H$, it holds that $c(\Psi_1)<c(\Psi_2)$. Similarly, if edge deletion costs are strictly positive, then $c(\Psi_1)<c(\Psi_3)$, otherwise $c(\Psi_1)=c(\Psi_3)=c(P[\pi_{0_\spaceG,G}])$. This implies $c(\Psi^\star)=c(\Psi_1)=c(P[\pi_{0_\spaceG,G}])$ in all cases.

For the reciprocal, we show its contraposition. Assume that $G$ is not Hamiltonian. By choice of $G$ and $H$, an optimal node map $\pi^\star_{H,G}\in\argmin_{\pi\in\Pi(G,H)}c(P[\pi])$ induces an edit path composed of both edge insertions from $G$ and edge removals from $H$. So the concatenation of the path induced by $\pi_{0_\spaceG,H}$ and the path induced by $\pi^\star_{H,G}$ contains $|E^G|$ edge insertions and at least one insertion and deletion of a same edge in $H$. By consequence, $c(\Psi_1)>c(P[\pi_{0_\spaceG,G}])$. Similarly, we obtain $c(\Psi_2)>c(P[\pi_{0_\spaceG,G}])$ and $c(\Psi_3)>c(P[\pi_{0_\spaceG,G}])$. This implies $c(\Psi^\star)\not=c(P[\pi_{0_\spaceG,G}])$.

Let $P$ be the path in $\bigA^\star$ from the root $0_\spaceG$ to $G$, and $\pi^\star_{H,G}$ be an optimal node map from $H$ to $G$:
\begin{eqnarray}
c(\bigA^\star)&\geq c(P)\geq c(P[\pi^\star_{G_0,G}])\label{eq:hardness}
\end{eqnarray}
Moreover, the following statements hold:
\begin{itemize}
\item An edit path from the empty graph $G_0$ to any other graph is optimal if and only if it contains only insertion operations.
\end{itemize}
Since $G$ contains more edges than $H$ and $G$ is non-empty, we know that $c(\bigA)>|\enc(G)|-\beta_G$ holds for all edit arborescence with topology $E^{\bigA}_2$ or $E^{\bigA}_3$. Moreover, the above statements imply that the edit arborescence $(\{G_0,H,G\},E^{\bigA}_1,\{\pi^\star_{G_0,H},\pi^\star_{H,G}\})$ has cost $|\enc(G)|-\beta_G$ if and only if $G$ is Hamiltonian. By \cref{eq:hardness}, this proves the theorem.
\end{proof}
 -------------------------------------------

\section{Arborescence-Based Compression}\label{sec:abc}

In this section, we show how to leverage \MGEA for compressing a set of labeled graphs. For this, we introduce reconstructible and non-reconstructible edit arborescences, formulate the \abclong (\ABC), and present an encoding for induced edit paths.

\paragraph{\textbf{Reconstructible Edit Arborescence.}}
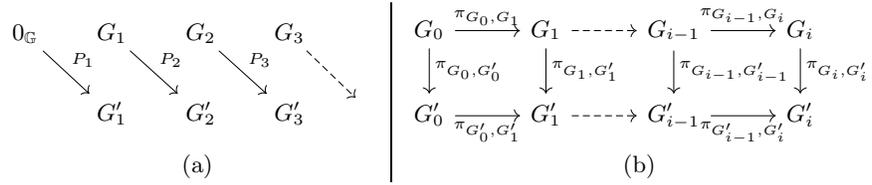
\begin{figure}[!t]
\centering
\begin{tabular}{c@{~}|@{~\,}c}
\begin{tikzcd}[cramped,column sep=1.8em]
0_\mathbb{G}\arrow[rd,"P_1"] & G_1\arrow[rd,"P_2"] &  G_{2}\arrow[rd,"P_3"] & G_3\arrow[rd,dashrightarrow] & \\
&G^\prime_1 & G^\prime_2 & G^\prime_{3}&{\color{white}G}
\end{tikzcd}
&
\begin{tikzcd}[column sep=2.7em]
    G_0\arrow[r,"\pi_{G_0,G_1}"]\arrow[d,"\pi_{G_0,G^\prime_0}"] & 
    G_1\arrow[d,"\pi_{G_1,G^\prime_1}"]\arrow[r,dashrightarrow] & G_{i-1}\arrow[d,"\pi_{G_{i-1},G^\prime_{i-1}}"]\arrow[r, "\pi_{G_{i-1},G_i}"] & G_i\arrow[d,"\pi_{G_i,G^\prime_i}"]\\
    G^\prime_0\arrow[r,"\pi_{G^\prime_0,G^\prime_1}"'] & 
    G^\prime_1 \arrow[r,dashrightarrow] & G^\prime_{i-1}\arrow[r,"\pi_{G^\prime_{i-1},G^\prime_i}"'] & G^\prime_i
\end{tikzcd}\\
(a)&(b)
\end{tabular}
\caption{(a) Paths in a non-reconstructible edit arborescence. (b) Composition of node maps used to construct reconstructible edit arborescences.}\label{fig:non-recons}
\end{figure}
When $d_c\coloneqq\GED$, since the definition of \GED is based on graph isomorphism,
applying an induced edit path $P[\pi_{G,H}]$ to a graph $G$ yields a graph $H^\prime\simeq H$. 
When using such edit paths within an edit arborescence $(\bigA,\Psi)\in\spaceA(X)$, with $X\subset\spaceG$ a set of graphs, the configuration described in \Cref{fig:non-recons}(a) occurs. Namely, the edit paths in $\Psi$ may be disjoint due to the isomorphism relation between the source graphs $G_i$ and target graphs $G_i^\prime$. Thus, the arborescence does not allow the reconstruction of any graph that is not directly connected to the root element. In this sense, only specific edit arborescences 
allow to reconstruct all graphs in $X$ up to isomorphism.
\begin{definition}[Reconstructability]\label{def:enc:ea}
An edit arborescence $(\bigA,\Psi)\in\spaceA(X)$ is \emph{reconstructible} if and only if each graph $G_i\in X$ can be constructed up to isomorphism by applying the sequence of edit paths $P\coloneqq(P_{s^1}, P_{s^2},\ldots, P_i)$ to the empty graph $0_\mathbb{G}$, where $(0_\spaceG,G_{s^1},G_{s^2},\ldots,G_i)$ is the path from $0_\spaceG$ to $G_i$ in $\bigA$.
\end{definition}
By composition of node maps (\Cref{fig:non-recons}(b)), it is easy to show 
the following property (proof omitted due to space constraints).
\begin{lemma}\label{lm:reconstruction}
For any edit arborescence $(\bigA,\Psi)$ on a set of graphs $X$, there is a reconstructible edit arborescence  $(\bigA,\Psi^\prime)$ on a set $X^\prime$, such that $X^\prime$ is isomorphic to $X$ and it holds that $c(\Psi^\prime)=c(\Psi)$.
\end{lemma}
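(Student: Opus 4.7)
The plan is to construct $X'$ and $\Psi'$ recursively, traversing $\bigA$ from the root and applying at each step the node-map composition indicated in \Cref{fig:non-recons}(b). The inductive hypothesis I would maintain is that, for every node $G_j$ already processed, a graph $G_j' \simeq G_j$ together with a witnessing isomorphism $\phi_j : G_j \to G_j'$ has been produced, and the new edit paths $P_{\bigA[j]}'$ constructed so far satisfy that \emph{applying} $P[\pi_j']$ to $G_{\bigA[j]}'$ yields \emph{exactly} $G_j'$ (not merely an isomorphic copy).

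First I would set $G_0' := 0_\spaceG$, with $\phi_0$ the trivial isomorphism. For the inductive step, consider a node $G_i$ with parent $G_{\bigA[i]}$ and original induced edit path $P_i = P[\pi_i]$ with $\pi_i \in \Pi(V^{G_{\bigA[i]}}, V^{H_i})$ for some $H_i \simeq G_i$. Having already built $\phi_{\bigA[i]} : G_{\bigA[i]} \to G_{\bigA[i]}'$, I define a new node map $\pi_i' \in \Pi(V^{G_{\bigA[i]}'}, V^{G_i'})$ as follows: on genuine nodes of $G_{\bigA[i]}'$, set $\pi_i'(u) := \pi_i(\phi_{\bigA[i]}^{-1}(u))$; for every pair $(\epsilon, v) \in \pi_i$ representing a node insertion, introduce a fresh node $v'$ (with the same label) and add $(\epsilon, v')$ to $\pi_i'$. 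Then let $G_i'$ be the graph obtained by literally applying $P[\pi_i']$ to $G_{\bigA[i]}'$, and define $\phi_i$ on each node of $G_i$ as $\phi_{\bigA[i]}$ composed with $\pi_i$ on non-inserted nodes and as the identification $v \mapsto v'$ on inserted ones.

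Next I would verify that $\phi_i$ is indeed a graph isomorphism and that $c(P[\pi_i']) = c(P[\pi_i])$. Both facts follow from the observation that $\pi_i'$ is obtained from $\pi_i$ by a bijective relabeling of the source nodes (via $\phi_{\bigA[i]}$) together with a bijective choice of fresh identifiers for inserted nodes: every induced edit operation in $P[\pi_i']$ corresponds one-to-one to an operation in $P[\pi_i]$ with the same type and the same labels, so the per-type counts in \Cref{rem:constant_costs} (and the general cost expression preceding it) are unchanged. Summing over $i$ yields $c(\Psi') = c(\Psi)$. The set $X' := \{G_1', \ldots, G_n'\}$ is elementwise isomorphic to $X$ by construction, and $(\bigA, \Psi')$ is reconstructible in the sense of \Cref{def:enc:ea} because the inductive invariant guarantees that the concatenation of paths along the root-to-$G_i'$ branch of $\bigA$, applied to $0_\spaceG$, produces $G_i'$ on the nose.

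The main obstacle is purely bookkeeping around the $\epsilon$-entries: the naive composition $\pi_i \circ \phi_{\bigA[i]}^{-1}$ is only defined on actual nodes, so inserted nodes of $\pi_i$ must be re-represented by fresh identifiers in $\pi_i'$, and implied edge deletions versus non-implied ones must be tracked so the operation-type partition $T$ is preserved. Once these identifications are spelled out carefully, cost preservation is essentially tautological, and reconstructibility is built into the construction itself.
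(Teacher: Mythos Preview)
Your proposal is correct and follows exactly the approach the paper indicates via \Cref{fig:non-recons}(b) (and operationalizes in \Cref{algo:encoding}): traverse $\bigA$ from the root, at each node compose the original node map with the isomorphism to the already-built parent $G_{\bigA[i]}'$, and read off the new edit path, which has the same multiset of typed operations and hence the same cost. The paper in fact omits the proof for space reasons, so your write-up is more detailed than what appears there; the only cosmetic point is that in the paper's conventions the node map $\pi_i$ already lives in $\Pi(V^{G_{\bigA[i]}},V^{G_i})$ rather than $\Pi(V^{G_{\bigA[i]}},V^{H_i})$, which slightly simplifies your definition of $\phi_i$.
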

By \Cref{def:enc:ea}, the set of graphs $X$ can be reconstructed up to graph isomorphism based on the encoding of a reconstructible edit arborescence.

\paragraph{\textbf{Problem Formulation.}}
For compressing a finite set of graphs $X\subset\spaceG$, we are interested in finding a reconstructible edit arborescence $(\bigA,\Psi)\in\spaceA(X)$ with a small encoding size $|\enc(\bigA,\Psi)|$, where $\enc(\cdot)$ denotes the encoding (a binary string) for the code $\enc$. Ideally we would like to minimize this size over $\spaceA(X)$ and all possible codes. To encode an edit arborescence, we encode both of its elements, \ie, the arborescence $\bigA$ defined as a sequence of indices, and the sequence $\Psi$ of edit paths induced by the node maps. In order to derive a useful expression for the code length function $|\enc(\cdot)|$, the edit path encodings  are concatenated. Thus, the encoding size to optimize is given by
$|\enc(\bigA,\Psi)|=|\enc(M_\bigA)|+|\enc(\bigA)|+{\sum}_{P \in\Psi} |\enc(P)|$,
where $M_\bigA$ is the overhead for decoding the different parts of $\enc(\bigA,\Psi)$. In order to optimize this size
, we must define an \emph{encoding size preserving} cost function which forces the encoding sizes of edit paths to coincide with their edit cost.
\begin{definition}[Encoding Size Preservation]\label{def:esp}
Let $\enc$ be a code for edit paths. An edit cost function $c$ is \emph{encoding size preserving} \wrt code $\enc$ if and only if there is a constant $\gamma$ such that $|\enc(P)|=c(P)+\gamma$ holds for any edit path $P$. Put differently, an encoding size preserving cost function assigns to each edit operation the space required in memory to encode the operations with code $\enc$. 
\end{definition}
Assuming that a code $\enc$ and an encoding size preserving cost function $c$ \wrt $\enc$ exist, the encoding size for any edit arborescence $(\bigA,\Psi)\in\spaceA(X)$ can be rewritten as $|\enc(\bigA,\Psi)|=|\enc(M_\bigA)|+|\enc(\bigA)|+c(\Psi)+\gamma|X|$. Since the encoding size for 
$\bigA$ depends only on the number of nodes, the problem of minimizing $|\enc(\bigA,\Psi)|$ amounts to minimizing $c(\Psi)$. Consequently, finding a compact encoding of a set of graphs $X$ reduces to a \MGEA problem as introduced in \Cref{sec:mgea}.
\begin{definition}[\ABC]
Let $X\coloneqq\{G_1,\ldots,G_n\}\subset\spaceG$ be a finite set of graphs, $\enc$ be a code for edit paths, and $c$ be an encoding size preserving edit cost function for $\enc$. Then, the \abclong (\ABC) asks for a minimum weight reconstructible edit arborescence $(\mathcal{A},\Psi^\prime)$ on some set of graphs $X^\prime\coloneqq\{G_1^\prime,\ldots,G_n^\prime\}$ such that, for all $i\in[1,n]$, $G^\prime_i\simeq G_i$.
\end{definition}
We stress that, thanks to the use of encoding size preserving edit costs, the value that is optimized by \ABC corresponds to the length of the code $\enc(\bigA,\Psi^\prime)$ up to a constant. In other words, solving $\ABC$ produces the most compact arborescence-based representation of $X$. Given the simple correspondence between reconstructible edit arborescences and their non-reconstructible counterparts, 
\ABC{} reduces to \MGEA{} by restricting to encoding size preserving edit costs. Since \MGEA is \NP-hard (Theorem~\ref{thm:hardness}), we propose to heuristically compute a low-cost edit arborescence as detailed below.

\paragraph{\textbf{Heuristic Solver for ABC.}}
\begin{algorithm}[!t]
\caption{\ABC encoding of graph collections.}\label{algo:encoding}
\begin{algorithmic}[1]
\REQUIRE A set of graphs $X$, a code $\enc$, and an edit cost function $c$.
\ENSURE Encoding $\enc(\bigA,\Psi^\prime)$ of a reconstructible edit arborescence $(\bigA,\Psi^\prime)$ on $X$.
\STATE{Compute $(\mathcal{A},\Psi)$ with Algorithm \ref{alg:generic_heuristic}.\label{algabc:init}}
\STATE{Initialize list $L\coloneqq[(0_\spaceG,\pi_{\text{id}})]$, where $\pi_{\text{id}}$ is the identity.}
\STATE{Initialize encoding $\enc(\bigA,\Psi^\prime)\coloneqq\enc(M_\bigA)\enc(E^\bigA)$.}
\WHILE{$L\not=\emptyset$\label{algabc:initBFS}}
\STATE{Pop an element $(G_i,\pi_{G_i,G_i^\prime})$ from $L$.}
\FORALL{children $G_j$ of $G_i$ in $\bigA$} 
\STATE{Get $\pi\in\Pi(G_i,G_j)$ with $P[\pi]=\Psi[j]$ and initialize node ID $v^\prime\coloneqq1$.}\label{algabc:map-start}
\STATE{Initialize $\pi^\prime\in\Pi(G^\prime_i,G^\prime_j)$ as node map of insertions and deletions only.}
\STATE{\textbf{for all} $v\in V^{G_i}$ \textbf{if} $\pi(v)\neq\epsilon$ \textbf{then} Set $\pi^\prime(\pi_{G_i,G_i^\prime}(v))\coloneqq v^\prime$ and increment $v^\prime$.}\label{algabc:map-end}
\STATE{Concatenate $\enc(P[\pi^\prime])$ to $\enc(\bigA,\Psi^\prime)$.}\label{algabc:concat}
\IF{$G_j$ is no leaf in \bigA}
Append $(G_j,\pi^\prime\circ\pi_{G_i,G_i^\prime}\circ\pi^{-1})$ to $L$.\label{algabc:map-iso}
\ENDIF
\ENDFOR
\ENDWHILE
\State \textbf{return} $\enc(\bigA^\prime,\Psi^\prime)$
\end{algorithmic}
\end{algorithm}
Algorithm \ref{algo:encoding} sketches our strategy to tackle the \ABC{} problem.
Given a set $X$ of graphs, it first uses Algorithm \ref{alg:generic_heuristic}, which outputs a non-reconstructible edit arborescence $(\bigA,\Psi)$ on $X$. After initializing the code, it starts encoding a reconstructible edit arborescence by  going through the arborescence in BFS order (line~\ref{algabc:initBFS}). For each new node $G_j$ with parent node $G_i$, a node map $\pi^\prime$ from $G^\prime_i\simeq G_i$ to $G^\prime_j\simeq G_j$ is reconstructed (lines~\ref{algabc:map-start} to \ref{algabc:map-end}), and the code of its induced edit path is added to the code of the arborescence (line~\ref{algabc:concat}). If $G_j$ is not a leaf, the node map representing the isomorphism between $G_j$ and $G_j^\prime$ is computed for later use (line~\ref{algabc:map-iso}).
\begin{remark}[Star Ratio]\label{rem:star}
In the worst case, we obtain a star $\bigS\in\spaceA(X)$, which connects the empty graph $0_\spaceG$ to all the graphs in $X$. This yields the upper bound $|\enc(\bigA,\Psi)|\leq|\enc(\bigS)|$ on the encoding size of the obtained arborescence.  Since encoding a graph from the empty graph by insertion operations only is similar to encoding the graph itself, the encoding size for the star is close to the encoding size for $X$, for a similar encoding strategy. Consequently, the \emph{star ratio} $|\enc(\bigA)|/|\enc(\bigS)|$ provides a good indicator for the compression quality.
\end{remark}
\begin{remark}
The encoded structure is designed to allow a straightforward decompression of any graph $G_i$, which, starting from the empty graph, simply consists in consecutively applying the edit paths along the path from the root to $G_i$ in \bigA.
\end{remark}

\paragraph{\textbf{A Code for Induced Edit Paths.}}
We show that there is a code $\enc$ for edit paths and an edit cost function $c$ such that $c$ is encoding size preserving \wrt $\enc$. Using the notations introduced in \Cref{sec:operations}, 
we encode an edit path as the concatenated string $\enc(P[\pi])\coloneqq\enc(M_P)\enc(P^\textnormal{ni-ed})\enc(P^\textnormal{nd})\enc(P^\textnormal{nr})\enc(P^\textnormal{er})\enc(P^\textnormal{ni})\enc(P^\textnormal{ei})$, where $M_P$ denotes the overhead for decoding each string $\enc(P^t)$. Note that the set $P^\text{i-ed}$ is not encoded, since implied edge deletions can be implicitly represented by node deletions. Similarly, we encode a set of edit operations $P^t$ as $\enc(P^t)\coloneqq\enc(o^t_1)\enc(o^t_2)\cdots\enc(o^t_{|P^t|})$, with $o_i^t\in P^t$. Any edit operation $o\coloneqq(a,\ell)$ is encoded as $\enc(o)\coloneqq\enc(a)\enc(\ell)$, with $\enc(a)\coloneqq\emptyset$ if $a=\epsilon$, and $\enc(\ell)\coloneqq\emptyset$ if $\ell=\epsilon_\ell$. That is, the dummy elements $\epsilon$ and $\epsilon_\ell$ in deletion operations and node insertions are not encoded. Ultimately, the encoding size for $P[\pi]$ hence depends on how the nodes, edges, and their labels are encoded in the codes of the edit operations.

We consider fixed-length codes for nodes, edges, and their labels (other codes will be studied in future works). For a set $X\in\spaceG$, nodes are encoded as integers on $\beta_V$ bits, edges are encoded as a pairs of integers on $2\beta_V$ bits, and node or edge labels are encoded on, respectively, $\beta_{\Sigma_V}$ and $\beta_{\Sigma_E}$ bits. 
Dictionaries can be used for the labels and encoded in the overhead $M_\bigA$ or known \emph{a priori}. In order to decode each set of edit operation $P^t$, $M_P$ must contain their sizes $|P^t|$. They are encoded on $\beta_P$ bits 
for each edit path. We 
obtain
$|\enc(P[\pi])|=\beta_{P}+\sum_{t\in T}c^t|P^t|$, 
where 
$c^\textnormal{nr}\coloneqq\beta_V+\beta_{\ell_V}$, $c^\textnormal{nd}\coloneqq\beta_V$, $ c^\textnormal{ni}\coloneqq\beta_{\Sigma_V}$, $c^\textnormal{er}\coloneqq c^\textnormal{ei}\coloneqq 2\beta_V+\beta_{\Sigma_E}$, $c^\textnormal{ed-ni}\coloneqq 2\beta_V$, and $c^\textnormal{ed-i}\coloneqq 0$.
With these constant costs, the pair $(\enc,c)$ defined above is encoding size preserving (\Cref{rem:constant_costs} and \Cref{def:esp}) with constant $\beta_P$ for any node map $\pi$, \ie, $|\enc(P[\pi])|=c(P[\pi])+\beta_P$. 
Therefore, the encoding size for a spanning edit arborescence $(\bigA,\Psi)\in\spaceA(X)$ reduces to $|\enc(\bigA,\Psi)|=|\enc(M_\bigA)|+|\enc(\bigA)|+c(\Psi)+\beta_P|X|$, which implies that minimizing  $|\enc(\cdot)|$ is an \ABC problem.

\section{Experiments}\label{sec:xp}
We performed an empirical evaluation of our compression method in the context of data archiving. Since no dedicated algorithms for compressing graph collections exist in this context, we compared it to the generic tar.bz compression, sufficient to highlight the potential of our method. For the evaluation, we used compression ratio (size of compressed graph collection divided by size of original graph collection), compression and decompression times, and star ratio (\Cref{rem:star}). Further investigations, \eg, comparisons against graph compression methods, applied to each graph individually or to a disjoint union of the graphs in the collection, will be carried out in future work. Other generic compression tools such as zip or tar.gz yielded worse compression ratios than tar.bz in initial tests.

\paragraph{\textbf{Datasets.}}
We used eight different datasets (\Cref{tab:data}). The datasets \aids and \muta from the IAM Graph Database Repository \cite{riesen:2008aa}, and \acyclic, \pah, and \mao from GREYC's Chemistry Dataset\footnote{\url{https://brunl01.users.greyc.fr/CHEMISTRY/index.html}} contain graphs modeling chemical compounds. We also tested on time-evolving minimum spanning trees (MSTs) induced by the pairwise correlations of a large-scale U.\,S.\ stocks time series dataset.\footnote{\url{https://www.kaggle.com/borismarjanovic/price-volume-data-for-all-us-stocks-etfs}} 
Such MSTs are widely used for detecting critical market events such as financial crises \cite{coelho:2007ux,liu:2021aa}. 
We constructed three versions of the MSTs with the code in \cite{liu:2021aa}: \stocksfloat (edge labels are floating-point stocks correlations), \stocksint (the correlations are rounded to integers), and \stocksunl (no edge label). 
For all datasets, graphs were initially stored in GXL format.\footnote{\url{https://userpages.uni-koblenz.de/~ist/GXL/index.php}} 

\begin{table}[t]
\caption{Number of graphs $|X|$, maximum and average number of nodes $|V|$, as well as node and edge label alphabet sizes $|\Sigma_V|$ and $|\Sigma_E|$ for all datasets.}\label{tab:data}
\centering
\begin{tabular}{@{}lS[table-format=4.0]S[table-format=3.0]S[table-format=3.2]S[table-format=3.0]S[table-format=3.0]@{~}|@{~}lS[table-format=4.0]S[table-format=3.0]S[table-format=3.2]S[table-format=3.0]S[table-format=3.0]@{}}
\toprule
dataset & {$|X|$} & {max $|V|$} & {avg $|V|$} & {$|\Sigma_V|$} & {$|\Sigma_E|$} & dataset & {$|X|$} & {max $|V|$} & {avg $|V|$} & {$|\Sigma_V|$} & {$|\Sigma_E|$} \\
\midrule
\acyclic & 183 & 11 & 8.15 & 3 & 1 & \mao & 68 & 27 & 18.38 & 3 & 4\\
\muta & 4337 & 417 & 30.32 & 14 & 3 & \stocksunl & 1600 & 213 & 212.99 & 213 & 0\\
\aids & 1500 & 95 & 15.72 &  {$\infty$} & 3 & \stocksint & 1600 & 213 & 212.99 & 213 & 100\\
\pah & 94 & 28 & 20.7 & 1 & 1 & \stocksfloat& 1600 & 213 & 212.99 & 213 & {$\infty$} \\
\bottomrule
\end{tabular}
\end{table}

\paragraph{\textbf{Parameters and Implementation.}}
We tested two versions of our \ABC method (\Cref{algo:encoding})\,---\,
with and without additional tar.bz compression of the obtained codes. For both versions, the out-degree $k$ of all nodes 
in $\mathcal{K}$ was varied across $\{0.1\cdot|X|,0.2\cdot|X|,\ldots,1.0\cdot|X|\}$, and we did 5 repetitions for each value. For the experiments reported in Table \ref{tab:arbo}, we performed 10 repetitions for each dataset. For \stocks,
we also added all temporal edges to $\mathcal{K}$ and always used the node maps induced by the stock identities across time (cf.\ lines \ref{algo:compression:edges-end} and \ref{algo:compression:identifiers} in \Cref{alg:generic_heuristic}). On the other datasets, the node maps were computed and refined using the \GED heuristics  $\ALGONE\coloneqq\BRANCHUNIFORM$ \cite{zheng:2015aa} and $\ALGTWO\coloneqq\IPFP$ \cite{blumenthal:2020aa}.
All algorithms were implemented in C++ using the \GED library GEDLIB \cite{blumenthal:2019aa} and the \MSA library MSArbor \cite{fischetti93}.
\footnote{\url{https://github.com/lucasgneccoh/gedlib}} 
Tests were run on a Linux system with an Intel Haswell CPU (24 cores, 2.4\,GHz each) and 19\,GB of main memory.

\begin{figure}[!t]
\centering
\begin{tikzpicture}
\begin{groupplot}[
group style={group name=groupplot, group size=4 by 2, horizontal sep=0.9cm, vertical sep=1.4cm},
width=.3\linewidth,
height=.3\linewidth,
/tikz/font=\footnotesize]
\ratioplots{\acyclic}{acyclic.csv}{2}
\ratioplots{\muta}{Mutagenicity.csv}{2}
\ratioplots{\pah}{pah.csv}{2}
\ratioplots{\mao}{mao.csv}{2}
\ratioplots{\aids}{AIDS.csv}{2}
\ratioplots{\stocksfloat}{msts_float_w.csv}{2}
\ratioplots{\stocksint}{msts_int_w.csv}{2}
\ratioplots{\stocksunl}{msts_no_w.csv}{2}
\end{groupplot}
\node at ($(groupplot c1r1.north) !.5! (groupplot c4r1.north)$) [inner sep=0pt,anchor=south, yshift=4ex] {\pgfplotslegendfromname{ratiolegend}};
\end{tikzpicture}%
\caption{Mean compression ratios \wrt out-degree $k$ for tar.bz and \ABC w/ or w/o tar.bz. 
For \stocks, the values for $k=0$ in the plots correspond to the setting where the auxiliary graph $\mathcal{K}$ only contains temporal edges.}\label{fig:cratios}
\end{figure}

\paragraph{\textbf{Compression Ratio.}}
\Cref{fig:cratios} shows that, for all datasets except \pah, \ABC with tar.bz significantly outperformed tar.bz compression alone and led to smaller compression ratios than \ABC w/o tar.bz for all datasets except \acyclic. Using out-degrees $k>0.4\cdot|X|$ only marginally improved compression. For \stocks, using only temporal edges ($k=0$) led to very good results. Moreover, \stocksunl can be more compressed with \ABC than the other \stocks datasets, as cheaper edit paths can be computed for graphs with unlabeled edges. 

\paragraph{\textbf{Arborescence Structure, Star Ratio, and Runtime.}}
Columns 2 to 4 of \Cref{tab:arbo} provide statistics regarding the arborescences computed with $k=0.4\cdot|X|$. They seem to have a good balance between depth and width (
number of leaves vs. number of internal nodes). 
The star ratios (column 5) indicate how much space is gained by using \ABC \wrt encoding each graph separately with the same underlying encoding scheme (a star ratio of $1$ means no compression). Columns 6 to 9 summarize the \ABC compression and decompression times. 
The most important observation is that, although \ABC is much slower than tar.bz, the runtimes are still acceptable in application scenarios where a data holder wants to offer compressed graph datasets for download (compressing the largest dataset \muta took about four to five hours).
\begin{table}[!t]
\caption{Mean compression and decompression times (in sec.), and standard deviations, of \ABC with tar.bz for $k=0.4\cdot|X|$, as well as mean depths, star ratios, numbers of leafs $|\mathcal{L}|$ and inner nodes $|\mathcal{I}|$ of the computed arborescences.}\label{tab:arbo}
\centering
\small
\begin{tabular}{@{}lS[table-format=4.0]S[table-format=4.0]S[table-format=3.1]S[table-format=0.2]S[table-format=5.0]S[table-format=4.1]S[table-format=2.1]S[table-format=1.2]@{}}
\toprule
dataset & {$|\mathcal{L}|$}  & {$|\mathcal{I}|$} & {avg depth} & {star ratio} & \multicolumn{2}{c}{compression} & \multicolumn{2}{c}{decompression} \\
\cmidrule(lr){6-7}\cmidrule(l){8-9}
 &&&&& {mean} & {std} & {mean} & {std}  \\
\midrule
\acyclic & 65 & 118 & 19.5 & 0.37  & 8 & 0.6 & 0.3 & 0.02 \\
\muta & 1751 & 2586 & 95.4 & 0.47 & 16052 & 2385.0 & 14.6 & 1.34 \\
\aids & 641 & 859 & 46 & 0.63 & 673 & 59.0 & 5.5 & 0.31 \\
\pah & 35 & 59 & 13.8 &  0.38 & 16 & 1.3 & 0.3 & 0.04 \\
\mao & 23 & 45 & 13.3 & 0.17 &  13 & 1.6 & 0.2 & 0.02\\
\stocksunl & 133 & 1467 & 75.9 & 0.51 &  1662 & 31.9 & 15.0 & 0.61\\
\stocksint & 153 & 1446 & 441.5 & 0.71 & 2095 & 46.8 & 18.7 & 0.37 \\
\stocksfloat & 148 & 1452 & 426.3 & 0.71 & 2166 & 28.5 & 18.9 & 0.43\\
\bottomrule
\end{tabular}
\end{table}
Indeed, unlike compression, decompression is fast even on the largest datasets (a couple of seconds). 
Runtime variations \wrt $k$ are detailed in \Cref{fig:scalability} for four datasets. As expected, the time required for computing the arborescences increases linearly with $k$, and the runtime of the refinement phase is independent of $k$. As the refinement algorithm \IPFP is randomized, the runtimes of the refinement phase have a higher variability than the runtimes of the arborescence phase.

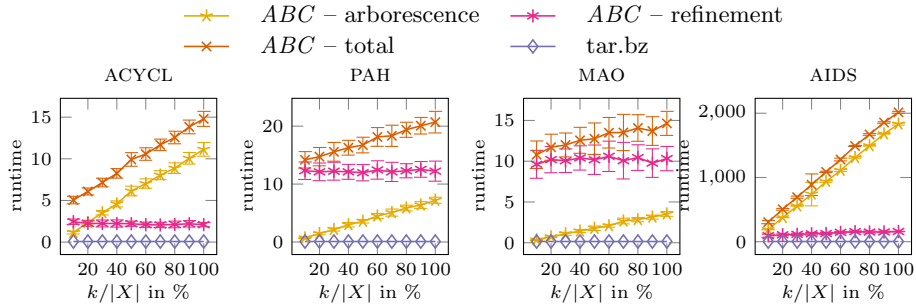
\begin{figure}[t]
\centering
\begin{tikzpicture}
\begin{groupplot}[
group style={group name=groupplot, group size=4 by 2, horizontal sep=1cm, vertical sep=1.5cm},
width=.3\linewidth,
height=.3\linewidth,
/tikz/font=\footnotesize]
\ctimeplots{\acyclic}{acyclic.csv}{2}
\ctimeplots{\pah}{pah.csv}{2}
\ctimeplots{\mao}{mao.csv}{2}
\ctimeplots{\aids}{AIDS.csv}{2}
\end{groupplot}
\node at ($(groupplot c1r1.north) !.5! (groupplot c4r1.north)$) [inner sep=0pt,anchor=south, yshift=3ex] {\pgfplotslegendfromname{ctimelegend}};
\end{tikzpicture}%
\caption{Means and standard deviations of runtimes (in sec.) for \ABC with tar.bz and its subroutines, and tar.bz alone. \ABC-total includes the final tar.bz step.}\label{fig:scalability}
\end{figure}



\section{Conclusions}\label{sec:conc}

In this paper, we have proposed the concept of an \emph{edit arborescence} and have introduced the \mealong (\MEA). \MEA yields a generic framework for inferring hierarchies in finite sets of complex data objects such as graphs or strings, which can be compared via \emph{edit distances}. We have shown how to leverage \MEA for the lossless compression of collections of labeled graphs\,---\,a task, for which no dedicated algorithms are available to date. Experiments on eight datasets show that our approach \ABC clearly outperforms standard compression tools in terms of compression ratio and that it achieves reasonable compression and decompression times. More precisely, the experiments showed that (1) on seven out of eight datasets, our \ABC method clearly outperformed tar.bz compression in terms of compression ratio; (2) compressing with \ABC is computationally expensive but still reasonable in settings where the compression is carried out by an institutional data holder; (3) decompression is much faster and only takes a couple of seconds even for the largest test datasets.


\bibliographystyle{splncs04}
\bibliography{sample}
\end{document}